\theoremstyle{definition}
\newtheorem{definition}{Definition}[section]
\theoremstyle{theorem}
\newtheorem{theorem}{Theorem}[section]
\theoremstyle{lemma}
\newtheorem{lemma}{Lemma}[section]
\icmltitlerunning{Learning Purposeful Behaviour in the Absence of Rewards}
\begin{document} 

\twocolumn[
\icmltitle{Learning Purposeful Behaviour in the Absence of Rewards}

\icmlauthor{Marlos C. Machado}{machado@ualberta.ca}
\icmlauthor{Michael Bowling}{mbowling@ualberta.ca}
\icmladdress{Department of Computing Science, University of Alberta, Canada}

\icmlkeywords{Reinforcement Learning, Option Discovery, Exploration}

\vskip 0.3in
]

\begin{abstract} 
Artificial intelligence is commonly defined as the ability to achieve goals in the world. In the reinforcement learning framework, goals are encoded as reward functions that guide agent behaviour, and the sum of observed rewards provide a notion of progress. However, some domains have no such reward signal, or have a reward signal so sparse as to appear absent.  Without reward feedback, agent behaviour is typically random, often dithering aimlessly and lacking intentionality.  In this paper we present an algorithm capable of learning purposeful behaviour in the absence of rewards.  The algorithm proceeds by constructing temporally extended actions (options), through the identification of purposes that are ``just out of reach'' of the agent’s current behaviour.  These purposes establish intrinsic goals for the agent to learn, ultimately resulting in a suite of behaviours that encourage the agent to visit different parts of the state space.  Moreover, the approach is particularly suited for settings where rewards are very sparse, and such behaviours can help in the exploration of the environment until reward is observed.
\end{abstract} 


\section{Introduction}
\label{intro}

Reinforcement learning (RL) has been successful in generating agents capable of intelligent behaviour in initially unknown environments; with accomplishments such as surpassing human-level performance in Backgammon \cite{Tesauro95}, helicopter flight \cite{Ng04}, and general competency in dozens of Atari 2600 games \cite{Mnih15}. Such successes are achieved by algorithms that maximize the expected cumulative sum of rewards, which can be seen as a measure of progress towards the desired goal. The goal is sometimes easily defined through rewards, such as $\pm1$ signals encoding win/loss in games or $-1$ signals informing the agent something undesirable occurred (\emph{e.g.,} a robot bumping into a wall). 

We are interested in the setting where the reward signal is uninformative or even absent. In the uninformative setting, the paucity of reward usually leads to dithering typical of $\epsilon$-greedy exploration approaches. This effect is particularly pronounced when the agent operates at a fine time scale, as is common of video game platforms \citep{Bellemare13}. In the complete absence of reward, it is unclear what intelligent behaviour should even constitute. Intrinsic motivation-based approaches \citep{Singh04,Oudeyer07,Barto13} offer a solution in the form of an intrinsic reward signal, and some authors have proposed agents undergoing developmental periods in which they are not concerned with maximizing extrinsic reward but in acquiring reusable options autonomously learned from intrinsic rewards \cite{Singh04}. However, here we instead consider the notion of a purposeful agent: one that can commit to a behaviour for an extended period of time.

To construct purposeful agents, we appeal to the options framework \citep{Sutton99}. Options extend the mathematical framework of reinforcement learning and Markov decision processes (MDPs) to allow agents to take temporally extended actions to accomplish subgoals. While this extension is an extremely powerful idea for allowing reasoning at different levels of abstraction, automatically discovering options (\emph{e.g.}, by identifying subgoals) is an open-problem in the literature. Generally, options are designed by practitioners who identify meaningful subgoals that can be used as stepping-stones to solve a complex task. Besides using options to divide a task in to subgoals, we advocate one can also use options to add decisiveness to agents in an environment in which rewards are not available, and that this is a better choice than aimless exploration.

In this paper we introduce an algorithm capable of learning purposeful behaviour in the absence of rewards. Our approach discovers options by identifying purposes that are achievable by the agent. These purposes are turned into intrinsic subgoals through to an intrinsic reward function, resulting in a suite of behaviours that encourage the agent to visit different parts of the state space. These options are particularly useful in the absence of rewards. As an example, when the agent observes a change in the environment through its feature representation, it tries to learn a policy capable of reproducing that change. Also, when such an option is added to the agent's action set, the agent now can move farther in the state-space, with some events that were rare now becoming frequent and events that were ``impossible'' now becoming ``just'' infrequent.

In this early paper we introduce the main ideas that underly our algorithm, including concepts such as ``purpose''. We also provide an algorithm for option discovery with linear function approximation, in contrast to most approaches for option discovery that rely on tabular representations. We show that in any finite MDP our learned options are guaranteed to have at least one state which will cause option termination.  Finally, we apply our approach to a simple domain, showing it can reach states further from the starting state, thus exhibiting intentionality in its behaviour.

\section{Background}

In this section we introduce the reinforcement learning~(RL) problem setting and the options framework. We also discuss the problem of option discovery and some approaches that try to address it. As a convention, we indicate random variables by capital letters (\emph{e.g.}, $S_t$, $R_t$), vectors by bold letters (\emph{e.g.}, $\boldsymbol{\theta}$), functions by lowercase letters (\emph{e.g.}, $v$), and sets by calligraphic font (\emph{e.g.}, $\mathcal{S}$, $\mathcal{A}$).

\subsection{Reinforcement Learning and Options}

In the RL framework \citep{Sutton98,Szepesvari10} an agent aims at maximizing some notion of cumulative reward by taking actions in an environment; these actions may affect the next state the agent will be as well as all subsequent rewards it will experience. It is generally assumed the tasks of interest satisfy the Markov property, being called Markov decision processes (MDPs). An MDP is formally defined as a 5-tuple $\langle\mathcal{S}, \mathcal{A}, r, p, \gamma\rangle$. At time $t$ the agent is in the state $s_t \in \mathcal{S}$ where it takes an action $a_t \in \mathcal{A}$ that leads to the next state $s_{t+1} \in \mathcal{S}$ according to the transition probability kernel $p(s'|s, a)$, which encodes $Pr(S_{t+1} = s' | S_{t} = s, A_{t} = a)$. The agent also observes a reward $R_{t+1} \sim r(s, a, s')$. The agent's goal is to learn a policy $\pi : \mathcal{S} \times \mathcal{A} \rightarrow [0,1]$ that maximizes the expected discounted return $G_t \doteq \mathbb{E}_\pi \big[\sum_{k=0}^{\infty} \gamma^k R_{t+k+1} | s_t\big]$, where $\gamma \in [0, 1)$ is known as the discount factor. We are interested in settings where the reward signal $R_t$ is uniform across the environment.

When learning to maximize $G_t$ it is common to learn an \emph{action-value function} defined as $q_\pi(s,a) \doteq \mathbb{E}_\pi[G_t| S_t = s, A_t = a]$. 
However, in large problems it may be infeasible to learn $q_\pi$ exactly for each state-action pair. To tackle this issue agents often learn an approximate value function: $q_\pi(s,a) \approx q_\pi(s,a; \boldsymbol{\theta})$. A common approach is to approximate these values using linear function approximation where $q_\pi(s, a; \boldsymbol{\theta}) \doteq \boldsymbol{\theta}^\top\phi(s,a)$, in which $\boldsymbol{\theta}$ denotes the vector of weights and $\phi(s,a)$ denotes a static feature representation of the state $s$ when taking action $a$. This can also be done through non-linear function approximation methods such as neural networks (\emph{e.g.}, \citeauthor{Tesauro95},~\citeyear{Tesauro95}, \citeauthor{Mnih15},~\citeyear{Mnih15}). Note that generally $\boldsymbol{\theta}$ has much smaller number of parameters than the number of states in $\mathcal{S}$. 

The standard RL framework is focused on MDPs, in which actions last a single time step. Nevertheless, it is convenient to have agents encoding higher levels of abstraction, which also facilitate the learning process if properly defined \citep{Dietterich98, Sutton99}. \citeauthor{Sutton99} extended the RL framework by introducing temporally extended actions called \emph{options}. Intuitively, options are higher-level actions that are extended through several time steps. Formally, an option $o$ is defined as 3-tuple $o = \langle \mathcal{I}, \varpi, \mathcal{T} \rangle$ where $\mathcal{I} \in \mathcal{S}$ denotes the initiation set, $\varpi : \mathcal{A} \times \mathcal{S} \rightarrow [0,1]$ denotes the option's policy, and $\mathcal{T} \in \mathcal{S}$ denotes the termination set. After initiated, actions are selected according to $\varpi$ until the agent reaches a state in $\mathcal{T}$. Originally, \citeauthor{Sutton99} defined a function $\beta : \mathcal{S} \rightarrow [0,1]$ to encode the probability of an option terminating at a given state. In this paper we define $\beta$ to be the characteristic function of the set $\mathcal{T}$: $\beta(s) = 1$ for all $s \in \mathcal{T}$ and $\beta(s) = 0$ for all $s \notin \mathcal{T}$, hence we overload the notation. Options generalize MDPs to semi-Markov decision processes (SMDPs) in which actions take variable amounts of time. Options are also useful when addressing the problem of exploration because they can move agents farther in the state-space.

\subsection{Option Discovery}

The potential of options to dramatically affect learning by improving exploration is well known (\emph{e.g.}, \citeauthor{McGovern98}, \citeyear{McGovern98}; \citeauthor{McGovern01}, \citeyear{McGovern01}; \citeauthor{Kulkarni16}, \citeyear{Kulkarni16}). Nevertheless, most works that benefit from options do not discover them, but assume they are provided or that there is a hardwired notion of interestingness (reward) that can be used to discover options \emph{e.g.}, salient events \cite{Singh04}.

The works that investigate how to discover options can be clustered in three different categories. The most common approach is to try to identify subgoal states through some heuristic such as visitation frequency~\cite{McGovern01}, graph-related metrics such as betweenness~\cite{Simsek08}, or graph partitioning metrics~\cite{Menache02, Mannor04, Simsek05}. Some authors have also tackled the problem of option discovery by trying to identify common subpolicies~\cite{Thrun94, Pickett02}, while others proposed methods based on the frequency of the change of state variables~\cite{Hengst02}.

The works on option discovery generally operate in a tabular setting where you can have states uniquely defined. Consequently, metrics such as frequency of visitation and transition graphs can be used for option discovery. Automatically discovering options in large state-spaces where function approximation is required is still a challenge. Our work presents an approach for option discovery in settings with linear function approximation, which has a much larger applicability. Few works tackled option discovery with function approximation. Those that did generally simplified the problem with additional assumptions such as knowledge of subgoal states~\cite{Konidaris09} or that one can control the interface between regions of the MDP~\cite{Hengst02}.

The proposals of not depending on a reward signal to discover meaningful behaviour~\cite{Simsek04}, and of looking at the different rates of changes in the agent's feature representation~\cite{Hengst02} are related to our work. This relationship will be clearer in the next section.

\section{Option Discovery for Purposeful Agents}


Approaches based on intrinsic motivation and novelty are some of the ways to circumvent the absence of rewards in the environment. \citeauthor{Schmidhuber10}~(\citeyear{Schmidhuber10}) summarizes several works based on intrinsic motivation, which he defines as algorithms that maximize a reward signal derived from the agent's learning progress. \citeauthor{Lehman11}~(\citeyear{Lehman11}) have advocated that agents should drop feedback they receive from the environment even in more traditional settings such as search, maximizing novelty instead. Both ideas are related to our work. We discover options based on novelty assuming no extrinsic rewards are available \cite{Lehman11}. These options are based on a very loose notion of a model of the environment, aiming at learning how to change principal components of a compressed environment representation \cite{Schmidhuber10}.

Our algorithm is based on four different concepts, namely: (i) storing the changes seen between two different time steps, (ii) clustering correlated changes in order to extract a purpose, (iii) learning policies capable of reproducing desired purposes, and (iv) transforming these policies into options that can be used to move the agent farther in the state space. After these steps a new set of options giving different purposes to the agent is discovered. These options guide the agent to different parts of the state space, which may lead to identifying new purposes. When such steps are used iteratively we create a self-reinforcing loop. We discuss each concept individually while introducing the algorithm. The algorithm we introduce uses a binary feature representation, but it is extensible to more general settings.

The agent initially follows some default policy (possible random) for a given number of time steps, using all actions available on its action set. While following such policy, at every time step the agent stores in a dataset $\mathcal{D}$ the difference between the feature representation of its current observation $\phi(s_t)$ and the representation of its previous observation $\phi(s_{t-1})$, \emph{i.e.}: $\mathcal{D} \leftarrow \mathcal{D} \cup \{\big(\phi(s_t) - \phi(s_{t-1})\big)\}$. It is important to stress that while storing changes one can easily see those that stand out, such as features that rarely change. Storing the features is less informative than the current transition because it is harder for the agent to identify when a feature really changed. Moreover, storing differences allow us to clearly identify different directions in the change, something that is useful in the next steps.

After a pre-defined number of time steps the agent stops storing transitions to identify purposes in the observed behaviour. It clusters together features that change together, avoiding correlated changes to generate the same purpose. Formally, such step consists in reducing the dimensionality of $\mathcal{D}$ through singular value decomposition~(SVD): $ \mathcal{D} = U \Sigma V^*$. The SVD generates a lower rank representation of the transitions stored in $\mathcal{D}$. Such low rank representation consists of eigenvalues and eigenvectors. The eigenvectors encode the principal components of $\mathcal{D}$ while the eigenvalues weight them according to how much that eigenvector is important to reconstruct $\mathcal{D}$. Each eigenvector can be seen as encoding a different purpose for the agent because all features that are somehow correlated are collapsed to a single eigenvector, explaining a direction of variation of the observed transitions. We call the eigenvectors obtained from the dataset of transitions \emph{eigenpurposes}.

\theoremstyle{definition}
\begin{definition}[Eigenpurpose]
Given a matrix $\mathcal{D}$ of transitions where each row encodes the difference between two consecutive observations, \emph{i.e.} $\phi(s_t) - \phi(s_{t-1})$, and having $V_i$ denoting the $i$-th row of matrix $V$; each eigenvector $(V^*)_i$ obtained from the singular value decomposition traditionally defined as $\mathcal{D} = U \Sigma V^*$ is called an \emph{eigenpurpose}.
\end{definition}

The following example provides an intuition about the importance of eigenpurposes. Imagine that an agent, by chance, leaves a building. By doing it the value of several of its features change (\emph{e.g.} lighting, temperature, soil). When we collapse all these changes with the SVD, instead of having a feature encoding ``temperature increase'', other encoding ``change of lighting", and so on, we have only an eigenpurpose encoding ``outside the building''.

Once eigenpurposes have been identified, the agent learns policies capable of maximizing their occurrence. In order to learn a policy that maximizes an eigenpurpose we need to define a proper reward function. Such reward $r_{i,t}$ is defined as the similarity between the observed transition and the eigenpurpose of interest $\boldsymbol{e_i}$, $r_{i, t} = \boldsymbol{e_i}^\top \big(\phi(s_{t}) - \phi(s_{t-1}) \big)$. Because SVD does not encode signs, we learn how to maximize eigenpurposes in both possible directions, \emph{i.e.} the agent also learns a second policy that maximizes $-r_{i, t}$. These policies are called \emph{eigenbehaviours}.
\theoremstyle{definition}
\begin{definition}[Eigenbehaviour]
A policy $\pi: \mathcal{S} \rightarrow \mathcal{A}$ is called an \emph{eigenbehaviour} if it is the optimal policy that maximizes the occurrence of an \emph{eigenpurpose} (Definition~3.1) in the original MDP augumented by the action $\bot$ that terminates the option; \emph{i.e.} $\pi(s) = \arg\max_a \max_\pi q_\pi(s,a)$ in the MDP $\langle \mathcal{S}, \mathcal{A}~\cup~\{\bot\}, (V^\top)_j \big(\phi(s_t) - \phi(s_{t-1})\big), p, \gamma \rangle$.
\end{definition}
The algorithm used to learn eigenbehaviours is not pre-defined, nor the order in which they are learned. 

We can naturally construct an option from the learned eigenbehaviour. To do so, we need to define the set of states for which the eigenbehaviour is effective (initiation set) and its termination condition. We define the initiation set of an option as the set of states $s$ in which, after learning, $q_\pi(s,a) > 0$ for at least one action $a \in \mathcal{A}$. Intuitively this corresponds to every state in which the policy can still make progress towards the eigenpurpose. The set of terminal states for this option is the complement of the initiation set, \emph{i.e.}~$\mathcal{S} \setminus \mathcal{I}$. Once such options are discovered we can add them to the agents action set, which allows the agent to repeat the described process with a policy that uses the discovered options to collect new data.

Notice that eigenvalues loosely encode how frequent each eigenpurpose was observed. Therefore, the eigenbehaviours corresponding to the lower eigenvalues encode purposes observed less frequently. Because of that, once an option for a ``rare'' purpose is discovered, this purpose will no longer be ``rare'' since a single decision (taking the option) is now capable of reproducing this rare event. We may also increase the likelihood of observing other unlikely ``purposes'' since a single action now moves the agent much farther in the state-space. This can help agents to explore environments in which rewards are very sparse, guiding the agent until a reward signal is observed.

The described algorithm is formally presented in Algorithm~1. An additional detail not discussed yet is that one can decide to learn how to maximize only a subset of the discovered eigenpurposes. In this work we did not evaluate the impact of different approaches. Here we propose a simple eigenvalue threshold $\kappa$ that determines which eigenpurposes are interesting. We select all eigenpurposes that have a correspondent eigenvalue greater than $\kappa$, which we interpret as discarding noise.

\renewcommand{\algorithmicrequire}{\textbf{Input:}}
\renewcommand{\algorithmicensure}{\textbf{Output:}}

\begin{algorithm}[t]
\caption{Purposeful Option Discovery (POD)}
\label{alg1}
\begin{algorithmic}[1]
    \REQUIRE \ \ $\mathcal{A}$
    \hspace{1.21cm} \COMMENT{Action set}\\
     \ \ $\kappa$ 
    \hspace{1.83cm} \COMMENT{Noise threshold}\\
     \ \ $n_I > 0$
    \hspace{1.01cm} \COMMENT{Number of iterations}\\
     \ \  $n_R > 0$
    \hspace{0.95cm} \COMMENT{Num. of random steps per iteration}
    
    \ENSURE $\Omega$
    \hspace{1.15cm} \COMMENT{Option set}
    
    \vspace{0.1cm}
    
    \STATE $\Omega \leftarrow \emptyset$

    \FOR {$i \leftarrow 1$ {\bf to} $n_I$}

    \STATE $\mathcal{D} \leftarrow \emptyset$

    \FOR {$j \leftarrow 1$ {\bf to} $n_R$}

    \STATE {Observe $\phi(s)$}
    
    \STATE Take an action $a \in \mathcal{A}$ or an option $o \in \Omega$ 

    \IF {option $o$ was taken}

    \WHILE {$s \notin \mathcal{T}_o$ \textbf{and} $j < n_R$}

    \STATE Take an action $a$ following $\varpi_o$

    \STATE {Observe $\phi(s')$}

    \STATE $\mathcal{D} \leftarrow \mathcal{D} \ \cup \ \big(\phi(s') - \phi(s) \big)$

    \STATE {Observe $\phi(s)$}

    \STATE $j \leftarrow j + 1$

    \ENDWHILE

    \ELSE

	\STATE Observe $\phi(s')$    

	\STATE $\mathcal{D} \leftarrow \mathcal{D} \ \cup \ \big(\phi(s') - \phi(s) \big)$

    \ENDIF

    \ENDFOR
    
    \vspace{0.1cm}
    
    \STATE $U, \Sigma, V \leftarrow $ \textsc{SVD}$(\mathcal{D}$)

    \vspace{0.1cm}

    \STATE {\bf for all} $\mathbf{j}$ such that $\mathbf{\Sigma_j > \kappa}$
    
    \STATE \hspace{0.25cm} Learn policy $\pi_j$ that max. $\big(V^\top \big)_j \big(\phi(s') - \phi(s) \big)$

    \STATE \hspace{0.25cm} Learn policy $\pi_k$ that max. $\big(-V^\top \big)_j \big(\phi(s') - \phi(s) \big)$

    \STATE \hspace{0.25cm} {$\mathcal{I}_j \leftarrow \{s | s \in \mathcal{S}, \exists a \in \mathcal{A}: q_{\pi_j}(s,a) > 0\}$}

    \STATE \hspace{0.25cm} {$\mathcal{I}_k \leftarrow \{s | s \in \mathcal{S}, \exists a \in \mathcal{A}: q_{\pi_k}(s,a) > 0\}$}
    
    \STATE \hspace{0.25cm} {$\Omega \leftarrow \Omega \cup \langle \mathcal{I}_j, \pi_j, \mathcal{S} \setminus \mathcal{I}_j \rangle \cup \langle \mathcal{I}_k, \pi_k, \mathcal{S} \setminus \mathcal{I}_k \rangle$}

    \STATE {\bf end for all}

    \ENDFOR
\end{algorithmic}
\end{algorithm}

Our constructed options represent a ``purpose'', which can be thought of as reaching states in the option’s termination set, which we can show is guaranteed to be non-empty.

Finally, it is important to guarantee that there is at least one state that satisfies the discovered purposes, or in other words, that the termination set of an option is not empty.

\begin{theorem}[Option's Termination]
Consider an option $o = \langle\mathcal{I}_o, \pi_o, \mathcal{T}_o\rangle$ discovered with Algorithm~1 where $\gamma~<~1$. Then $\mathcal{T}_o$ is nonempty.
\end{theorem}
\begin{proof}[Proof intuition]
Consider the state with the largest potential value.  From this state the agent must terminate either due to the discount factor or due to the termination action in a state with lesser or equal potential value.  The cumulative reward received is the difference in potential and so the expected value of the state must be non-positive. The complete proof is in the Appendix.
\end{proof}


\section{Experimental Evaluation}

We performed an empirical validation in a hand-crafted domain which allows us to clearly illustrate our algorithm's features, namely:
\begin{itemize} \setlength\itemsep{0cm}
\item At each new iteration of the algorithm the discovered options become increasingly more complex.
\item More complex options are able to move the agent farther away in the state space.
\item As the agent moves farther away with newly discovered options, it observes new eigenpurposes, discovering more options, what creates a self-reinforcing loop.
\end{itemize}

We evaluated our algorithm in a random walk in a toy domain consisting of moving around a ring of length 4096 with deterministic transitions. The agent starts at the $x$ coordinate 0 and at every time step it chooses between going right or going left. We use linear function approximation with the two's complement binary encoding as representation (12 bits long). When the agent goes left on the state 0 it goes to the state $-1$ ($0000\ 0000\ 0000_2 \rightarrow 1111\ 1111\ 1111_2$). Similarly, going right in state $2047$ transitions to $-2048$. There are no rewards in this environment.

All the evaluations were made in the same setting. The selected ``noise'' parameter $\kappa$ was set to $1$ and the environment, when learning the eigenbehaviours, had a discount factor $\gamma = 0.99$. The policies were obtained through value iteration with $100$ iterations. Each round of our algorithm consisted of $1,000$ time steps in which the agent collected transitions to discover eigenpurposes. We ran six rounds, with round zero having only primitive actions available. The agent's default policy was to select uniformly random among the currently available actions, or options.

We first evaluated our results in agents with full observability, in which the agent perceives states as described above (Figure \ref{fig:full}, and Table~1). By looking at the average length of the discovered options we see that options become increasingly complex with each iteration. This added complexity allows the agent to move to farther states, as evidenced by the increasing distance between farthest point and the agent's starting state at that iteration (Max Dist. from Start). The improvement is particularly clear when comparing to a sample random walk on primitive actions (Figure \ref{fig:full}). Note that, despite options constructed in later iterations still use only primitive actions, they present more complex behaviours. This is different than typical option discovery methods, which construct hierarchies of options. 

We also evaluated a setting in which the agent had partial observability. In this setting the agent does not observe the three least significant bits encoding the state. This collapses several states together and makes it much harder for the agent to observe progress. Interestingly, the same behavioural pattern as in the full observability experiment emerges. Agents still come up with options of the type ``flip the $i$-th bit'' once they discover these purposes. The unique difference is that fewer options are discovered at each iteration, due to fewer observable eigenpurposes.

\begin{table*}
\begin{center}
\scriptsize{
\caption{Characteristics of discovered options, per iteration, when compared to a random walk in a ring. Each number is the average of 30 runs and standard deviations are reported between parentheses. For details about the selected parameters \emph{c.f.} text.}
\begin{tabular}{ c | l | r l | r l | r l | r l | r l | r l}
  \specialrule{.1em}{.05em}{.05em}
  Observ. &Metric & \multicolumn{2}{c|}{Iter. 0} & \multicolumn{2}{c|}{Iter. 1} & \multicolumn{2}{c|}{Iter. 2} & \multicolumn{2}{c|}{Iter. 3} & \multicolumn{2}{c|}{Iter. 4} & \multicolumn{2}{c}{Iter. 5}\\
  \specialrule{.1em}{.05em}{.05em}
  \parbox[t]{2mm}{\multirow{3}{*}{\rotatebox[origin=c]{90}{Full}}}
   &Num. Options Discov.  & -    & (-)    & 5.9   & (5.5)   & 7.7   & (10.3)  & 8.5   & (8.8)   & 9.2   & (12.6)  & 9.5   & (11.4)  \\
   &Avg. Opt. Length      & -    & (-)    & 12.1  & (1.1)   & 19.2  & (2.1)   & 21.6  & (2.0)   & 25.5  & (2.0)   & 27.8  & (1.7)   \\
   &Max Dist. from Start  & 29.3 & (18.2) & 168.7 & (222.5) & 240.1 & (287.3) & 269.9 & (311.5) & 287.1 & (436.9) & 298.9 & (320.8) \\
  \hline
  \parbox[t]{2mm}{\multirow{3}{*}{\rotatebox[origin=c]{90}{Partial}}}
   &Num. Options Discov.  & -    & (-)    & 3.5   & (20.9)  & 5.2   & (14.1)  & 6.2   & (19.4)  & 6.6   & (30.2)  & 6.8   & (21.6)  \\
   &Avg. Opt. Length      & -    & (-)    & 20.4  & (1.8)   & 30.5  & (1.4)   & 33.5  & (2.1)   & 35.9  & (1.7)   & 37.7  & (1.7)   \\
   &Max Dist. from Start  & 29.3 & (18.2) & 212.8 & (326.9) & 314.9 & (314.3) & 301.8 & (434.3) & 352.4 & (464.1) & 301.1 & (360.0) \\
  \specialrule{.1em}{.05em}{.05em}
\end{tabular}
}
\end{center}
\end{table*}

\begin{figure}
    \centering
    \begin{subfigure}[b]{0.23\textwidth}
        \includegraphics[width=\textwidth]{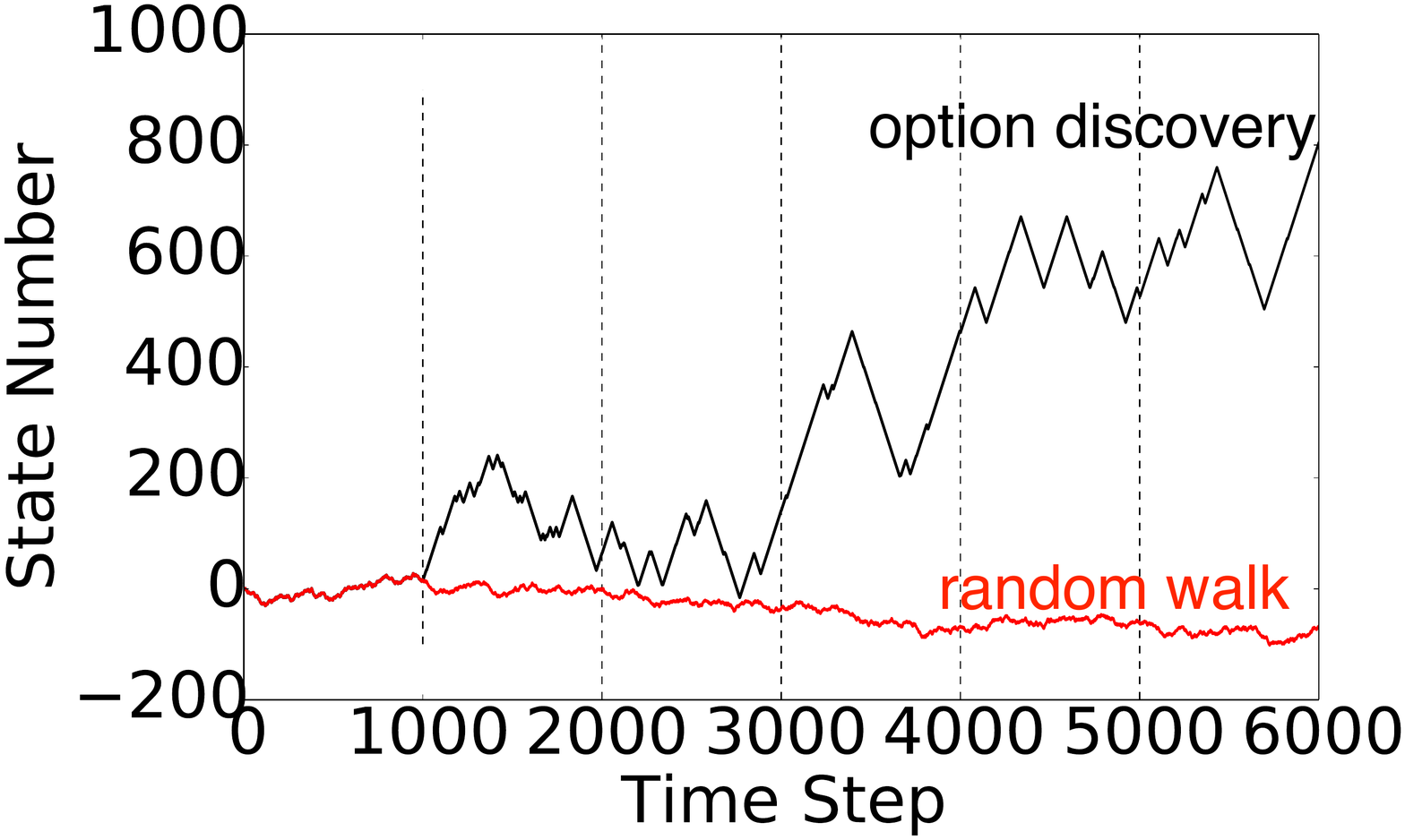}
        \caption{Full observability}
        \label{fig:full}
    \end{subfigure}
    ~ 
    \begin{subfigure}[b]{0.23\textwidth}
        \includegraphics[width=\textwidth]{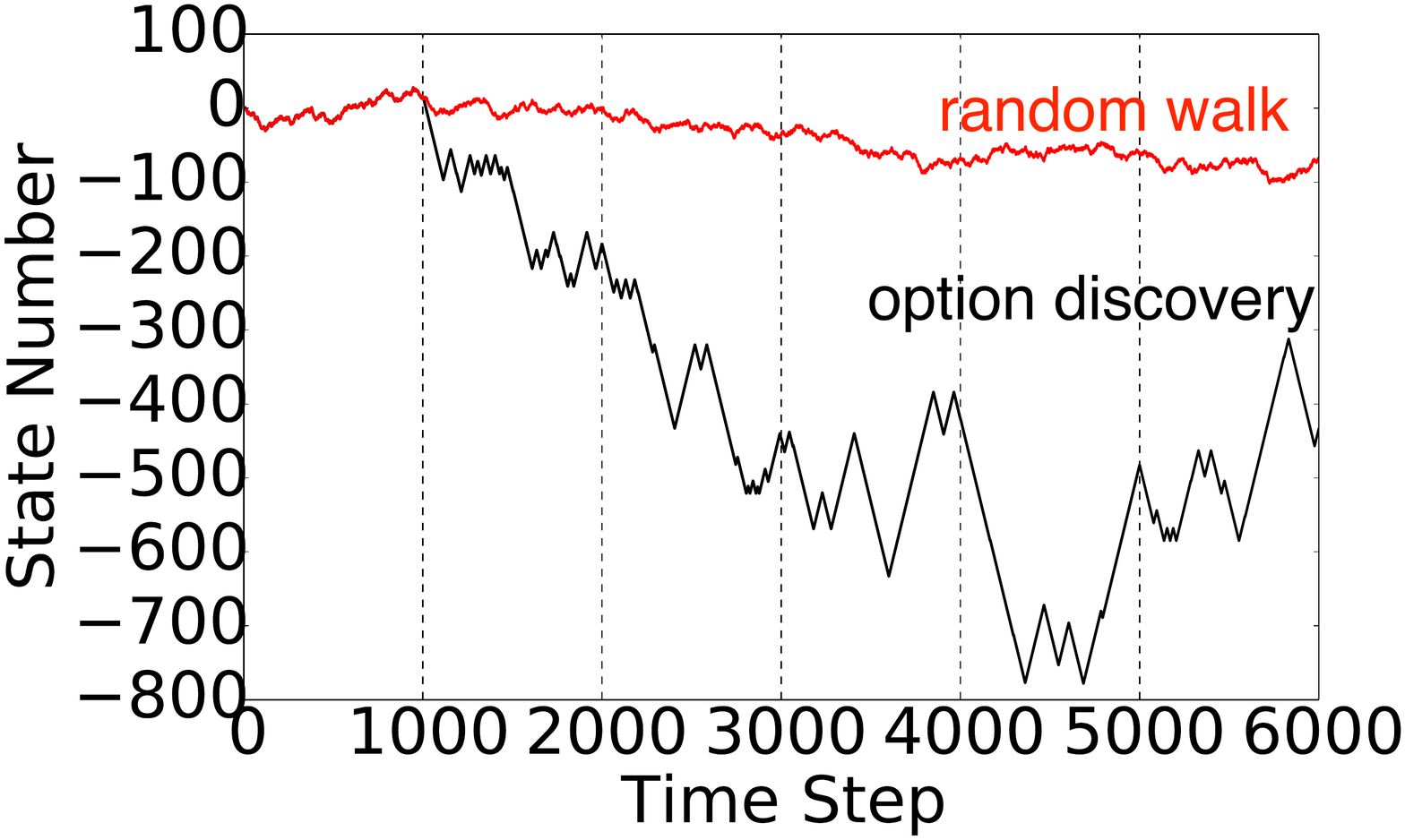}
        \caption{Partial observability}
        \label{fig:partial}
    \end{subfigure}
    \caption{Sample random walk using primitive actions and a random walk using the discovered options. Dashed vertical lines represent iteration boundaries.}\label{fig:options}
\end{figure}

\section{Conclusion}

In this paper we introduced a new algorithm capable of discovering options without any feedback in the form of rewards from the environment. Our algorithm discovers options that reproduce purposes extracted from the states seen by an agent acting randomly. We presented experimental results showing how the discovered options greatly improve exploration by introducing decisiveness on the agents, avoiding the traditional aimless dithering. We also showed evidences that our approach may work well with partial observability.

As future work, we plan to investigate how this algorithm behaves in more complex environments, such as the Arcade Learning Environment~\cite{Bellemare13}. We can evaluate our algorithm on these large domains because it is amenable to function approximation, differently from most other approaches for option discovery. Naturally, we then have to be able to learn a policy, using the discovered options, to maximize the discounted sum of rewards. Some of our preliminary results using interrupting options do seem promising. However, when applying this algorithm to larger domains we face a challenge not discussed here: the exploding number of eigenpurposes (and consequently discovered options), indicating that proper sampling techniques must be further evaluated. Finally, it is important to have coevolving action and representation abstractions: higher levels of action abstraction should drive the agent to improve its representation of the world and once the agent has a better representation of the world, better action abstractions should become available. This is a topic that is not commonly investigated but that needs to be addressed in the future, maybe our algorithm can be the first step towards that direction.

\vspace{-0.01cm}

\section*{Acknowledgements}
The authors thank Richard S. Sutton and Marc G. Bellemare for insightful discussions that helped improve this work, and Csaba Szepesv\'ari for point out the Neumann series we used in our proof. This work was supported by grants from Alberta Innovates – Technology Futures and the Alberta Innovates Centre for Machine Learning (AICML). 

\balance

\bibliography{references}
\bibliographystyle{icml2016}

\onecolumn
\section*{Appendix}

\begin{lemma}
 Suppose $(I + A)$ is a non-singular matrix, with $||A|| \leq 1$. We have:
$$||(I + A)^{-1}|| \leq \frac{1}{1 - ||A||}.$$
\end{lemma}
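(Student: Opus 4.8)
The plan is to exhibit $(I+A)^{-1}$ explicitly as a Neumann series and then bound its norm termwise by a geometric series. First I would dispose of the degenerate case: if $\|A\| = 1$ the right-hand side is $+\infty$ and the inequality is vacuous, so I may assume $\|A\| < 1$, which is the only case with content. Throughout I work with a submultiplicative matrix norm (e.g.\ an operator-induced norm), so that $\|A^k\| \leq \|A\|^k$ for every $k$.

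Next I would show that the partial sums $S_n \doteq \sum_{k=0}^{n} (-A)^k$ converge. For $m > n$ one has $\|S_m - S_n\| \leq \sum_{k=n+1}^{m} \|A\|^k \to 0$ since $\|A\| < 1$, so $(S_n)$ is Cauchy in the (complete) space of matrices and converges to some matrix $S$. To identify $S$ with the inverse, I telescope: $(I+A)S_n = S_n(I+A) = I - (-A)^{n+1}$, and since $\|(-A)^{n+1}\| \leq \|A\|^{n+1} \to 0$, letting $n \to \infty$ gives $(I+A)S = S(I+A) = I$. Because $I+A$ is non-singular by hypothesis, this forces $(I+A)^{-1} = S = \sum_{k=0}^{\infty}(-A)^k$.

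The desired bound is then immediate from the triangle inequality and submultiplicativity: $\|(I+A)^{-1}\| = \big\|\sum_{k=0}^{\infty}(-A)^k\big\| \leq \sum_{k=0}^{\infty}\|A\|^k = \frac{1}{1-\|A\|}$. The argument is essentially routine; the only points that demand attention are the choice of a submultiplicative norm (needed for $\|A^k\| \le \|A\|^k$, hence for the convergence estimate) and the boundary case $\|A\| = 1$. As a cross-check, the estimate can also be obtained without any series: for every vector $x$ one has $\|(I+A)x\| \geq \|x\| - \|A\|\,\|x\| = (1-\|A\|)\|x\|$, and substituting $x = (I+A)^{-1}y$ and taking the supremum over unit vectors $y$ yields the claim directly.
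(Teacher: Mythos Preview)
Your argument is correct but follows a different route from the paper. You construct $(I+A)^{-1}$ explicitly as the Neumann series $\sum_{k\ge 0}(-A)^k$, prove convergence, and then bound the norm by the geometric series. The paper instead starts from the identity $(I+A)(I+A)^{-1}=I$, rewrites it as $(I+A)^{-1}=I-A(I+A)^{-1}$, takes norms, and rearranges to obtain $(1-\|A\|)\,\|(I+A)^{-1}\|\le 1$. The paper's manipulation is shorter and uses the non-singularity hypothesis directly, with no limiting argument needed; your approach is slightly longer but delivers more, since it actually proves invertibility whenever $\|A\|<1$ (so the non-singularity hypothesis is in fact redundant in that range) and yields an explicit series representation of the inverse. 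Your final cross-check, bounding $\|(I+A)x\|$ from below and substituting $x=(I+A)^{-1}y$, is essentially the operator-level version of the paper's rearrangement trick.
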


\begin{proof}\footnote{Our proof follows closely the proof of Parnell in lecture notes available at \url{http://www-solar.mcs.st-and.ac.uk/~clare/Lectures/num-analysis.html}.}
\begin{align*}
(I + A)(I + A)^{-1} &= I\\
I(I + A)^{-1} + A(I + A)^{-1} &= I\\
(I + A)^{-1} &= I - A(I + A)^{-1}\\
||(I + A)^{-1}|| &= ||I - A(I+A)^{-1}||\\
                 &\leq ||I|| + ||A (I + A)^{-1}|| && \text{because} \ \ ||A + B|| \leq ||A|| + ||B||\\
                 &\leq 1 + ||A||||(I + A)^{-1}|| && \text{because} \ \ ||A B|| \leq ||A|| \cdot ||B||\\
||(I + A)^{-1}|| - ||A||||(I + A)^{-1}||&\leq 1\\
(1-||A||) ||(I + A)^{-1}|| &\leq 1\\
||(I + A)^{-1}|| &\leq \frac{1}{1 - ||A||} && \text{if} \ \ ||A|| \leq 1.
\end{align*}
\end{proof}

\begin{lemma}
The induced infinity norm of $(I - \gamma T)^{-1}T$ is bounded by
$$||(I - \gamma T)^{-1}T||_\infty \leq \frac{1}{(1 - \gamma)}.$$
\end{lemma}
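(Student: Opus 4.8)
The plan is to obtain this as an immediate corollary of the preceding lemma by a suitable substitution. Here $T$ is a transition probability matrix, so each of its rows is non-negative and sums to one; hence the induced infinity norm satisfies $||T||_\infty = 1$. I would then set $A = -\gamma T$. Since $\gamma < 1$, this gives $||A||_\infty = \gamma\,||T||_\infty = \gamma \le 1$, and since every eigenvalue of $\gamma T$ has modulus at most $\gamma < 1$, the matrix $I + A = I - \gamma T$ is non-singular. Thus the hypotheses of the previous lemma are satisfied.

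Invoking that lemma with this choice of $A$ yields $||(I - \gamma T)^{-1}||_\infty \le \tfrac{1}{1 - ||{-\gamma T}||_\infty} = \tfrac{1}{1-\gamma}$. Since the induced infinity norm is submultiplicative, I would finish with
\[
||(I - \gamma T)^{-1} T||_\infty \;\le\; ||(I - \gamma T)^{-1}||_\infty\,||T||_\infty \;\le\; \frac{1}{1-\gamma}\cdot 1 \;=\; \frac{1}{1-\gamma}.
\]
Alternatively, one can argue directly from the Neumann series: because $||\gamma T||_\infty = \gamma < 1$, we have $(I - \gamma T)^{-1} = \sum_{k\ge 0} \gamma^k T^k$, so $(I - \gamma T)^{-1} T = \sum_{k\ge 0} \gamma^k T^{k+1}$, and the triangle inequality together with $||T^{k+1}||_\infty \le ||T||_\infty^{k+1} = 1$ gives $||(I - \gamma T)^{-1} T||_\infty \le \sum_{k\ge 0}\gamma^k = \tfrac{1}{1-\gamma}$.

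There is no real obstacle here: the only points requiring a word of justification are that $||T||_\infty = 1$ (because $T$ is row-stochastic) and that $I - \gamma T$ is invertible (because $\gamma < 1$), and these are precisely the facts that license the appeal to the preceding lemma (or the summation of the Neumann series). Everything else is a one-line submultiplicativity estimate.
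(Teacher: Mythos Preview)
Your proposal is correct and follows essentially the same route as the paper: apply the preceding lemma with $A=-\gamma T$ to bound $\lVert (I-\gamma T)^{-1}\rVert_\infty$, then use submultiplicativity and $\lVert T\rVert_\infty=1$. The Neumann-series alternative you mention is also invoked in the paper (to justify invertibility), so nothing here differs materially from the original argument.
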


\begin{proof}
\begin{align*}
||(I - \gamma T)^{-1}T||_\infty & \leq ||(I - \gamma T)^{-1}||_\infty||T||_\infty  && \text{because} \ \  ||AB||_{\infty} \leq ||A||_{\infty} \cdot ||B||_{\infty}\\
||(I - \gamma T)^{-1}T||_\infty & \leq \frac{1}{1 - ||-\gamma T||_\infty} ||T||_\infty && \text{Lemma 3.1} \\
||(I - \gamma T)^{-1}T||_\infty & \leq \frac{1}{1 - \gamma ||T||_\infty} ||T||_\infty && \text{because}  \ \ ||\lambda B|| = |\lambda| ||B|| \\
||(I - \gamma T)^{-1}T||_\infty & \leq \frac{1}{(1 - \gamma)}
\end{align*}
\end{proof}

\begin{theorem}[Option's Termination]
Consider an option $o = \langle\mathcal{I}_o, \pi_o, \mathcal{T}_o\rangle$ discovered with Algorithm~1 where $\gamma~<~1$. Then $\mathcal{T}_o$ is nonempty.
\end{theorem}

\begin{proof}
We can write the Bellman equation in the matrix form: $\bm{v} = R + \gamma T \bm{v}$, where $v$ is a \emph{finite} column vector with one entry per state encoding its value function. From Algorithm~1 we have $R = T\bm{w} - \bm{w}$ with $\bm{w} = \phi(s)^\top \bm{e}$, where $\bm{e}$ denotes the eigenpurpose of interest. Therefore:

\begin{align*}
\bm{v} &= T\bm{w} - \bm{w} + \gamma T \bm{v}\\
\bm{v} + \bm{w} &= T\bm{w} + \gamma T \bm{v}\\
                &= T\bm{w} + \gamma T \bm{v} + \gamma T \bm{w} - \gamma T \bm{w}\\
                &= (1- \gamma) T \bm{w} + \gamma T (\bm{v} + \bm{w})\\
\bm{v} + \bm{w} - \gamma T (\bm{v} + \bm{w}) &= (1 - \gamma) T \bm{w}\\
(I - \gamma T) (\bm{v} + \bm{w}) &= (1 - \gamma) T \bm{w}\\
 \bm{v} + \bm{w} &= (1 - \gamma) (I - \gamma T)^{-1} T \bm{w} && (I - \gamma T)^{-1} \ \text{is guaranteed to be nonsigular because}\\
 & && ||T|| \leq 1 \text{, where } \ ||T|| = \sup_{\mathbf{v}:||\mathbf{v}||_\infty = 1} ||T\mathbf{v}||_\infty \text{. By }\\
 & && \text{Neumann series we have } (I - \gamma T)^{-1} = \sum_{n=0}^\infty \gamma^nT^n\\
 ||\bm{v} + \bm{w}||_\infty &= (1 - \gamma)||(I - \gamma T)^{-1} T \bm{w}||_\infty && \text{using the induced norm}\\
 ||\bm{v} + \bm{w}||_\infty &\le (1 - \gamma)||(I - \gamma T)^{-1} T||_\infty ||\bm{w}||_\infty && \text{because $||A\bm{x}||\leq||A||\cdot||\bm{x}||$}\\
 ||\bm{v} + \bm{w}||_\infty &\le (1 - \gamma) \frac{1}{(1-\gamma)} ||\bm{w}||_\infty && \text{Lemma~3.2}\\
 ||\bm{v} + \bm{w}||_\infty &\le ||\bm{w}||_\infty\\
\end{align*}
We can shift $\bm{w}$ by any finite constant without changing the reward, \emph{i.e.} $T\bm{w} - \bm{w} = T(\bm{w} + \bm{\delta}) - (\bm{w} + \bm{\delta})$ because $T\bm{1}\delta = \bm{1}\delta$ since $\sum_j T_{i,j} = 1$. Therefore, we can assume $\bm{w} \ge \bm{0}$. Let $s^* = \arg\max_s \bm{w}_{s^*}$, so that $\bm{w}_{s^*} = ||\bm{w}||_\infty$. Clearly $\bm{v}_{s^*} \le 0$, otherwise $||\bm{v} + \bm{w}||_\infty \ge |\bm{v}_{s^*} + \bm{w}_{s^*}| = \bm{v}_{s^*} + \bm{w}_{s^*} > \bm{w}_{s^*} = ||\bm{w}||_\infty$, arriving at a contradiction.
\end{proof}

\end{document}